\renewcommand{\vec}{\boldsymbol}
\newcommand{\allzeros}{\vec 0}
\newcommand{\mat}{\boldsymbol}
\newcommand{\round}[1]{\lfloor #1 \rceil}
\DeclareMathOperator*{\argmax}{arg\,max}
\newtheorem{theorem}{Theorem}[section]
\newtheorem{proposition}[theorem]{Proposition}
\newtheorem{remark}[theorem]{Remark}
\begin{document}


\begin{frontmatter}

\title{The Subset Sum Matching Problem}

\author[A]{\fnms{Yufei}~\snm{Wu}\footnote{Equal contribution.}}
\author[B]{\fnms{Manuel R.}~\snm{Torres}\thanks{Corresponding Author. Email: manuel.torres@jpmchase.com.}\footnotemark}
\author[B]{\fnms{Parisa}~\snm{Zehtabi}}
\author[B]{\fnms{Alberto}~\snm{Pozanco Lancho}}
\author[B]{\fnms{Michael}~\snm{Cashmore}}
\author[B]{\fnms{Daniel}~\snm{Borrajo}}
\author[B]{\fnms{Manuela}~\snm{Veloso}}

\address[A]{J.P. Morgan Quantitative Research}
\address[B]{J.P. Morgan AI Research}

\begin{abstract}
This paper presents a new combinatorial optimisation task, the Subset Sum Matching Problem (SSMP), which is an abstraction of common financial applications such as trades reconciliation. We present three algorithms, two suboptimal and one optimal, to solve this problem. We also generate a benchmark to cover different instances of SSMP varying in complexity, and carry out an experimental evaluation to assess the performance of the approaches.
\end{abstract}

\end{frontmatter}

\section{Introduction}

Combinatorial Optimisation (CO) problems aim to find an optimal configuration over a discrete domain of possibilities. This paper proposes and formulates a new CO problem, the Subset Matching Problem (SMP). At a high level, the input to SMP is two multisets of objects, a Boolean function that determines if exactly one subset from each multiset form a so-called \emph{match}, and an objective function defined over the feasible solutions. Feasible solutions correspond to sets of disjoint pairs of matches and the goal is to maximize the given objective function over such solutions. 

There are many CO problems with similarities to SMP but no direct equivalent. In the generalised assignment problem~\cite{oncan2007survey}, for example, the goal is to take two sets and create matches with exactly one element from both sets subject to packing constraints. This has similarities to SMP but is different in many respects, including limiting the size of the matches made. Many CO problems also include in their definition a given function that checks whether a match is valid. For instance, in the subset sum problem~\cite{kleinberg2006algorithm}, a match is valid when the sum of the values of the elements on one side is equal to a given target value. SMP is also similar to the hypergraph matching problem~\cite{schrijver2003combinatorial}, where if we partition the vertices into two sets, we can view a hyperedge as a match. However, matches in SMP are formed from subsets of the input multisets, so these subsets are not direct inputs to the problem, a difference from the hypergraph matching problem.
SMP outputs a set of matches, without any constraint on the size of the subsets in each match. SMP also is independent of the function used to define the validity of a given match as the function is an input to the problem. Finally, as opposed to other problems computing several matches, SMP allows some elements to not be part of a match in a feasible solution.

Based on the formulation of SMP, we introduce the Subset Sum Matching Problem (SSMP), an instantiation of SMP where the Boolean validation function focuses on subset sums (see Fig.~\ref{fig:ssmp} for an example instance and three possible feasible solutions). Note that to formally specify instances of SMP, we have to (a) define the types of objects in the multisets and (b) define the Boolean function used to determine matches. For SSMP, the types of objects in the multisets are real values, and we say two subsets form a match if the absolute difference of the sums is within some given error tolerance $\epsilon$.

\begin{figure}[t]
\centering
\includegraphics[width=0.45\textwidth]{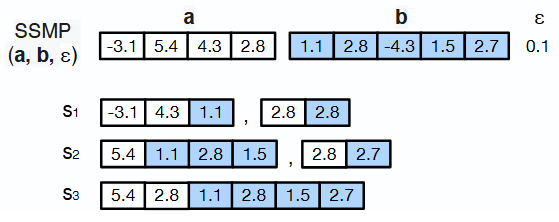}
\caption{\label{fig:ssmp} An example of SSMP with input multisets  $\vec a, \vec b$ and tolerance $\epsilon$. $s_1, s_2, s_3$ indicate three different feasible solutions. For example, $s_2$ is a feasible solution consisting of two matches. The first match contains one element from $\vec a$, three elements from vector $\vec b$, and is indeed a match because $|5.4 - (1.1+2.8+1.5)| \le \epsilon = 0.1$}
\end{figure}

While the general SMP has many applications in the context of task assignment, for example, matching a set of tasks to a set of workers given some skill constraints, this paper focuses on SSMP.
SSMP has many real-world applications. In particular, it is a critical part of an accounting process known as reconciliation, where two sets of financial records are compared to ensure numerical accuracy and agreement. Minor discrepancies may be accepted and explained due to timing differences in the processing of payments and deposits, or as a result of data entry errors.
To connect reconciliation back to SSMP, the two multisets of objects are the two sets of financial records and the matching function that checks the absolute difference of sums allows for such minor discrepancies.
Reconciliation tasks, for example, inter-company or customer reconciliations, are labour intensive, and are essential for helping businesses and individuals to confirm that accounts are consistent and complete. They are also fundamental to detect potential fraudulent activities or errors via discovering discrepancies between two financial records or account balances. Reconciliation is recognized as an essential internal control mechanism, ensuring data integrity and consistency across financial record systems~\cite{ge2005disclosure,vorhies2006account}.

The main contributions of this paper are:
\begin{itemize}
    \item We propose new CO problems, the Subset Matching Problem (SMP) and the Subset Sum Matching Problem (SSMP), and provide problem formulations for them. 
    \item We introduce three algorithms, some optimal and some sub-optimal, to solve SSMP tasks. 
    \item We define a benchmark composed of several SSMP tasks and run experiments to show how different types of algorithms perform on the benchmark. 
\end{itemize}

\section{Subset Matching Problem}

We start by formally defining the Subset Matching Problem (SMP). 

\subsection{Problem Definition}
Assume $\vec a \in \mathcal{A}^M$, $\vec b \in \mathcal{B}^N$ are two ordered lists of items.\footnote{We do not specify $\mathcal{A}, \mathcal{B}$ here to ensure the problem description remains general. We only require that the elements in $\mathcal{A}$ and $\mathcal{B}$ support addition and multiplication.} Subsets of $\vec a$ and $\vec b$ are represented with $\vec a_{\vec w}$ and $\vec b_{\vec v}$, where $\vec w \in \{0,1\}^M$, $\vec v \in \{0,1\}^N$ are \emph{inclusion vectors} indicating inclusion/exclusion of each element in that subset corresponding to ordered lists $\vec a$, $\vec b$ (technically, these are multisets as we allow repetition in $\vec a$ and $\vec b$). Let $f$ be a Boolean function that 
determines whether a pair of subsets forms a valid match. Hence, a (valid) \emph{match} is a pair of subsets $\langle \vec a_{\vec w}, \vec b_{\vec v} \rangle$ that satisfies $f(\vec a_{\vec w}, \vec b_{\vec v}) = $ True.

The Subset Matching Problem (SMP) aims to find a set of \emph{matches} with non-overlapping elements that maximize some metric. Let $\mathcal{S}$ be the finite set of all feasible solutions to an SMP instance -- each feasible solution $s \in \mathcal{S}$ can be expressed as a combination of $K > 0$ matches 

\begin{equation*}
s = \{\langle\vec a_{\vec w^k}, \vec b_{\vec v^k}\rangle : k=1,...,K\}
\end{equation*}
where inclusion vectors in all \emph{matches} do not overlap:
\begin{equation}
\label{eq:SMP_matching_criteria}
    \forall i,j \in [1, K], i \ne j, \vec w^i \cdot \vec w^j = 0, \vec v^i \cdot \vec v^j = 0
\end{equation}
where $\cdot$ is the dot product.
We define SMP as a combinatorial optimisation problem $\text{SMP}(\vec a, \vec b, f, \Psi)$,
where $\Psi: \mathcal{S} \rightarrow \mathbb{R}$ is a \emph{measure} of the solution quality to be maximised. An optimal solution is:
\begin{equation}
\label{eq:optimal}
    s^* = \argmax_{s \in \mathcal{S}} \Psi(s) 
\end{equation}

\subsection{Corresponding Decision Problem}

We define $\text{SMP}^-(\vec a, \vec b, f)$
as the decision problem corresponding to the SMP CO problem. $\text{SMP}^-$ can be answered with (a) `yes', by finding (any) \emph{one} match or (b) `no', indicating that no feasible solution exists (i.e., $|\mathcal{S}| = 0$). As a matter of notation, let $\perp$ indicate that no feasible solution exists. The set of feasible solutions of the corresponding $\text{SMP}^-$ instance, denoted as $\mathcal{S}^-$, is a subset of the corresponding $\mathcal{S}$:
\begin{equation}
    \mathcal{S}^- = \{s^- \in \mathcal{S}: |s^-| \le 1 \}.
\end{equation}

\section{The Subset Sum Matching Problem}
The Subset Sum Matching Problem (SSMP) is an instantiation of SMP, which formalizes a group of real-world problems like reconciliation in the financial domain. We present in this paper a formulation of SSMP that derives from our definition of SMP. In particular, the matching function $f$ is defined as checking whether the differences between the sums of the numbers of each match on the two sides is below a given threshold, given by a parameter $\epsilon$:

\begin{align}
\label{eq:SSMP_matching_criteria}
    f_{\text{SSMP}}(\vec a_{\vec w}, \vec b_{\vec v}, \epsilon)\!=\!(|\vec w\!\cdot\!\vec a - \vec v\!\cdot\!\vec b| \leq \epsilon) \wedge (\vec w, \vec v\not= \allzeros)
\end{align}
where $\allzeros$ represents the all-zeros vector.
In this paper, $\Psi$ is defined as:
\begin{align}
\label{eq:SSMP_target_function}
        \Psi_{\text{SSMP}}(s) =  \sum_{k=1}^{K} \left[\sum_{m=1}^M\vec w^k_m +\sum_{n=1}^N\vec v^k_n\right] + K
\end{align}
to encourage solutions to cover more elements with finer-grained matches. One could naturally consider scaling $K$ in $\Psi_{\text{SSMP}}$ by a tunable parameter. However, this, and many other alternatives one might consider, is left for future work. Considering Eq. \eqref{eq:SSMP_matching_criteria} and \eqref{eq:SSMP_target_function}, we define:
\begin{equation*}
    \text{SSMP}(\vec a, \vec b, \epsilon) = \text{SMP}(\vec a, \vec b, f_{\text{SSMP}}, \Psi_{\text{SSMP}})
\end{equation*} 
We write the corresponding decision problem of SSMP as $\text{SSMP}^-(\vec a, \vec b, \epsilon)$ which is an NP-complete problem (this is clear from a reduction to the classical subset sum problem when one side contains the target value, the other side is the input list of integers, and $\epsilon = 0$).

\section{Related Works}

Since SSMP is a new CO problem, we seek for similar works while highlighting those that provided inspiration for formulating and solving SSMP instances. 

The Subset Sum Problem (SSP) is closely related to SSMP, which is a canonical CO problem with the goal of finding a subset of numbers that sum up precisely to a target number~\cite{kleinberg2006algorithm}. Search-based algorithms for solving SSP include exhaustive search (e.g., binary tree search) or using heuristics for tree pruning (see, for example,~\cite{horowitz1974computing}). SSP can also be solved in pseudo-polynomial time using dynamic programming (DP) or similar methods~\cite{bringmann2017near,kellerer2003efficient,koiliaris2019faster,pisinger2003dynamic}. The run-time complexity can be further improved by more sophisticated algorithm or data structure design (see, for example, ~\cite{eppstein1997minimum,koiliaris2019faster}). Beyond SSP, as stated in the introduction, there are many other CO problems with similarities to SSMP, such as hypergraph matching~\cite{schrijver2003combinatorial}, maximum clique~\cite{karp2009reducibility}, set packing~\cite{karp2009reducibility}, and optimal multiway partioning~\cite{schreiber2018optimal}.

For some of the techniques used in this paper, we draw inspiration from solvers for the 0-1 Knapsack Problem (KP), where the goal is to maximise the total value in a knapsack with a capacity limit~\cite{martello2000new}. Algorithms for solving 0-1 KP mainly fall into two groups: search, usually branch-and-bound search applying tight upper bounds to guide the search~\cite{dantzig1957discrete,dasgupta1992nonstationary,martello1977upper}; and DP based on the Bellman recursion~\cite{bellman1966dynamic,pisinger1997minimal,pisinger1999linear}. Search and DP can also be combined to solve KP more efficiently~\cite{martello1999dynamic}. 

In addition to search or DP, CO problems like KPs can be interpreted as optimization problems and solved with Mixed-Integer Linear Programming (MILP)~\cite{conforti2014integer}, which is a constrained optimisation problem supported by industrial optimisers like CPLEX~\cite{manual1987ibm}. SSP can also be approximately solved via quadratic optimisation by forming a proxy problem over a convex set~\cite{sahni1974computationally}. Furthermore, frontiers in related areas show a trend in recent works on other CO problems to use reinforcement learning to guide the search~\cite{mazyavkina2021reinforcement}. 

As already noted, reconciliation is an essential function to maintain data integrity and consistency in accounting systems~\cite{ge2005disclosure,vorhies2006account}. It is often solved manually using rules-based systems~\cite{chew2020unsupervised}. In a standard reconciliation problem, 
each individual record in the two sets being reconciled contains a transaction amount and other transactional information, such as a textual description of the transaction. In~\cite{chew2020unsupervised,chew2012automated}, they utilize statistical and machine learning based approaches to match records based on both the transaction amount and the other transactional information. The focus of this work is on matching transactions solely based on the transaction amount, which can be used in conjunction with approaches that match only the transactional information. We demonstrate that matching transaction amounts is a complex, computationally challenging CO problem. An automated solution to reconciliation therefore would benefit from isolating this matching problem from the transactional information matching problem.

\section{An Optimal Algorithm for SSMP via Integer Programming}

In this section, we provide an algorithm to optimally solve SSMP by modeling it as a mixed integer linear program (MILP). For a given $\text{SSMP}(\vec a, \vec b, \epsilon)$ instance,
where $\vec a = [a_1, a_2,...,a_M]$ and $\vec b = [b_1, b_2, ..., b_N]$ represent two vectors of non-zero real numbers, we introduce three sets:  $\mathcal{M} = \{1,..., M\}$;  $\mathcal{N} = \{1,..., N\}$; and  $\mathcal{Z} = \{1, ..., K\}$ where $K$ is the number of potential matches in a feasible solution. (It suffices to set $K = \min \{M, N\}$ for in each feasible solution, the maximum number of matches is $\min\{M, N\}$, which is attained by feasible solutions that assign each element in the smaller vector to its own match.)
We define the following binary variables, taking value 1 when:
\begin{itemize}
    \item $w^k_i$: element $i \in \mathcal{M}$ is included in match $k \in Z$;
    \item $v^k_j$: element $j \in \mathcal{N}$ is included in match $k \in Z$;
    \item $m_k$: if match $k \in \mathcal{Z}$ is made,
\end{itemize}
yielding a total of $(M \times K) + (N \times K) + K$ variables. Considering $\Psi_{\text{SSMP}}$ and the optimisation task in SSMP introduced in~\eqref{eq:optimal} and~\eqref{eq:SSMP_target_function}, we have modelled the objective function as:
\begin{align}
\max_{m, w, v}\ 
    \sum_{\substack{k \in \mathcal{Z}}}
        m_k
    \ +  \ 
    \sum_{\substack{i \in \mathcal{M},\ k \in \mathcal{Z}}}
        w^k_i
            \ +  \ 
    \sum_{\substack{j \in \mathcal{N},\ k \in \mathcal{Z}}}
        v^k_j
\end{align}

Similar to $\Psi_{\text{SSMP}}$ (Eq.~\eqref{eq:SSMP_target_function}), the first term of the objective function represents the number of generated matches, and the remaining two terms represent the total number of matched elements from each list. In the following we discuss the constraints that are considered for our model.

Based on the problem definition, the sum of each match should be within the threshold $\epsilon \ge 0$. This is enforced by:
\begin{equation}
\begin{split}
    \left|\sum_{i \in \mathcal{M}}a_i w^k_i - \sum_{j \in \mathcal{N}} b_j v^k_j\right| \leq \epsilon ,
    \quad
    \forall k \in \mathcal{Z}
\end{split}
\end{equation}

Eq.~\eqref{eq:mip_one_match} ensures that each element from each list will be included in at most one match.

\begin{equation}
\begin{split}
\label{eq:mip_one_match}
    \sum_{k \in \mathcal{Z}} w^k_i \leq 1,
    \quad
    \sum_{k \in \mathcal{Z}} v^k_j \leq 1,
    \quad
    \forall i \in \mathcal{M},  j \in \mathcal{N}
\end{split}
\end{equation}

Also, we want to enforce that the $k$-th match is generated if and only if there is at least one element for each list:

\begin{equation}
\begin{split}
\label{eq:one_element}
    w^k_i \leq m_k,
    \quad
    v^k_j \leq m_k,
    \quad
    \forall i \in \mathcal{M},  j \in \mathcal{N}, k \in \mathcal{Z}
\end{split}
\end{equation}
We finally want to ensure that if there are no elements in the $k$-th match, then the value of the binary variable associated to that match is 0.

\begin{equation}
\begin{split}
\label{eq:zero}
    \sum_{i \in \mathcal{M}} w^k_i \geq m_k,
    \quad
    \sum_{j \in \mathcal{N}} v^k_j \geq m_k,
    \quad
    \forall k \in \mathcal{Z}
\end{split}
\end{equation}

\section{Faster Sub-optimal Algorithms of SSMPs}
A sub-optimal solution to SSMP can be computed by a greedy algorithm that iteratively creates and solves a series of $\text{SSMP}^-$ tasks with the remaining elements in $\vec a$ and $\vec b$. Alg.~\ref{alg:pseudo-framework} 
presents this algorithm for SSMP assuming access to an algorithm Solve for the decision problem $\text{SSMP}^-$.
In this section, we present two algorithms for Solve based on search and dynamic programming.

 \begin{algorithm}[!htb]
\caption{Sub-optimal Solver for SSMP}
\label{alg:pseudo-framework}
\textbf{Input}:  $\text{SSMP}(\vec a, \vec b, \epsilon)$\\
\textbf{Output}: $s$
\begin{algorithmic}[1] 
\State $s \gets \emptyset$, finish $\gets False$ 
\While{not (finish)}
\State $s^-\gets$ Solve($\text{SSMP}^-(\vec a, \vec b, \epsilon)$) 
\If{ $s^- = \perp$}
\State finish $\gets True$ 
\Else 
\State $s \gets s \cup s^-$ 
\State $\vec a \leftarrow \vec a_{/s^-}$ and $\vec b \leftarrow \vec b_{/s^-}$
\EndIf
\EndWhile
\State \textbf{return} $s$
\end{algorithmic}
\end{algorithm}

\subsection{Search Solver for $\text{SSMP}^-$}\label{sec:search-solver}
The search solver applies an exhaustive search due to lack of heuristics for SSMPs. Inspired by~\cite{horowitz1974computing}, we use caching to reduce the run-time. The $\text{SSMP}^-$ search solver comprises of two main steps: (a) pre-computing subset sums and storing them in memory and (b) search in memory until a valid \emph{match} is found. The search solver is shown in Alg.~\ref{alg:search_match} and Fig.~\ref{fig:search} is an illustration of the algorithm with an example. 
\begin{algorithm}[!htb]
\caption{Search Solver for $\text{SSMP}^-$}
\label{alg:search_match}
\textbf{Input}:  $\text{SSMP}^-(\vec a, \vec b, \epsilon)$\\
\textbf{Output}: $s^-$
\begin{algorithmic}[1] 
\State Generate $\mathcal{C}$, $h$ based on $\vec b$
\For{each $ \vec w \in \{0,1\}^{M}\setminus \{\allzeros\}$, each $(c, \vec v') \in \mathcal{C}$}
\State $\hat{d} \gets \vec w \cdot \vec a - c$
\If{$\epsilon = 0$}
  \State $K \gets \{\hat d\}$
\Else
  \State $\hat d_r \gets \round{(\vec w \cdot \vec a - c) / \epsilon}$
  \State $K \gets \{\hat d_r - 1, \hat d_r, \hat d_r + 1\}$
\EndIf
\For{$(d, \vec v '')$ in $h(k)$ for all $k\in K$} 
  \If{$d \in [\hat d - \epsilon, \hat d + \epsilon]$}
    \State \textbf{return} $s^- \gets \{\langle \vec a_{\vec w}, \vec b_{\vec v' \oplus \vec v''} \rangle$\}
  \EndIf
\EndFor
\EndFor
\State \textbf{return} $s^- \gets \perp$
\end{algorithmic}
\end{algorithm}
\begin{figure}[!hbt]
\centering
\includegraphics[width=0.45\textwidth]{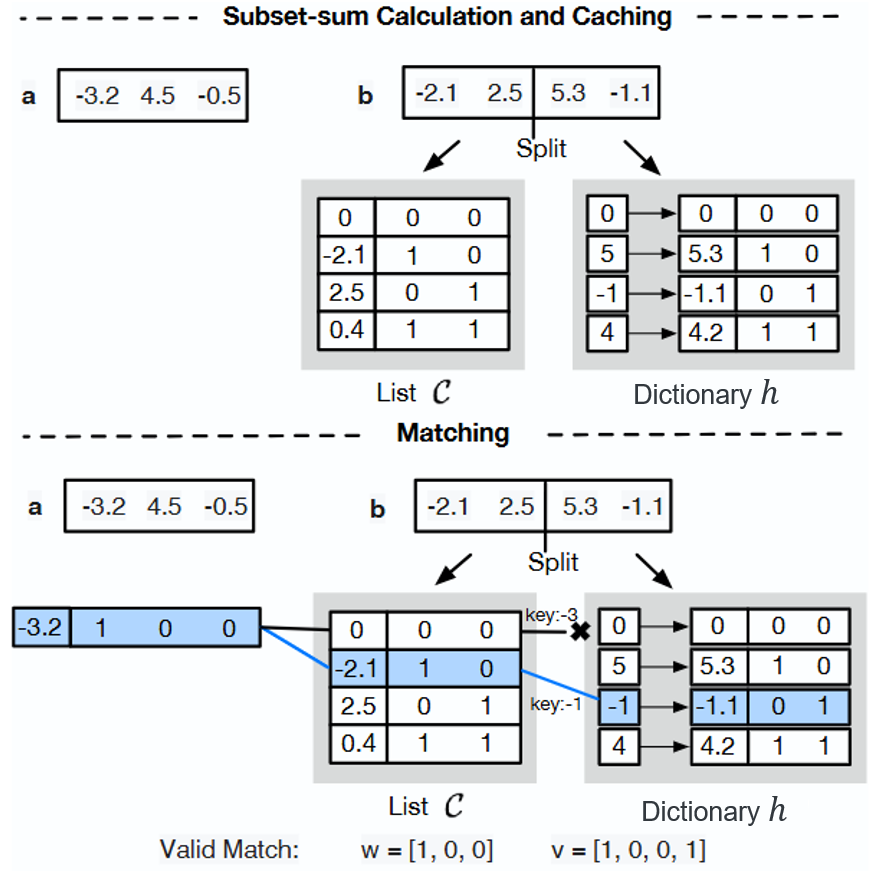}
\caption{\label{fig:search} An example of the search algorithm for solving an $\text{SSMP}^-$ instance. \textbf{Subset-sum Calculation and Caching}: $\vec b$ is split with $r=2$. Subset sums and corresponding inclusion vectors are stored in list $\mathcal C$ and dictionary $h$. \textbf{Matching}: the algorithm generates a subset from $\vec a$ at each iteration and then seeks to match with records saved in memory.}
\end{figure}

\subsubsection{Pre-calculation and Caching}
Pre-calculation and caching can avoid repetitively computing the same combinations during the search procedure. Furthermore, our algorithm parameterizes the space requirement, which is an ideal characteristic of a worst-case exponential space algorithm.

Assume $N \ge M$. We split the longer vector $\vec b$ into two parts $\vec b'$, $\vec b''$ satisfying:
\begin{equation}
    \vec b' \oplus \vec b'' = \vec b,\quad |\vec b'| = r
\end{equation}
where $\oplus$ denotes vector concatenation and $r \in [0, N]$ is the split point. The corresponding inclusion-exclusion vectors for subsets representing $\vec b'$ and $\vec b''$ are $\vec v' \in \{0,1\}^r$ and $\vec v'' \in \{0,1\}^{N-r}$, respectively. We calculate subset sums and perform caching for these two components in slightly different ways:
\begin{itemize}
    \item For all possible $\vec v' \in \{0,1\}^r$, the corresponding subset sum $c$ is calculated and stored in a list $\mathcal{C}$ (note $|\mathcal{C}|$ = $2^r$):
    \begin{equation*}
         \mathcal{C} = \{(c, \vec v'): c = \vec v' \cdot \vec b', \vec v' \in \{0,1\}^r\} 
    \end{equation*}
    \item For all possible $\vec v'' \in \{0,1\}^{N-r}$, we calculate the corresponding subset sum $d$ and store it in the associative array $h(\cdot)$:

    If $\epsilon = 0$,
    \begin{equation*}
        h(d) := \{(d, \vec v'') : d = \vec v'' \cdot \vec b'', \vec v'' \in\{0,1\}^{N-r}\} 
    \end{equation*}
    If $\epsilon > 0$,
    \begin{equation*}
        h(\round{d/\epsilon}) := \{(d, \vec v'') : d = \vec v'' \cdot \vec b'', \vec v'' \in\{0,1\}^{N-r}\} 
    \end{equation*}
    where $\round{x}$ denotes the integer closest to $x$. 
\end{itemize}

\subsubsection{Matching}
The matching procedure in the search-based approach is described in Alg.~\ref{alg:search_match}, which goes through all subsets sums of $\vec a$ and searches for matched values from the subset sums of $\vec b$. With the help of the pre-calculated subset sums stored, the algorithm iterates over each $\vec w \in \{0,1\}^{M}\setminus \{\allzeros\}$ to calculate subset sums of $\vec a$ at run-time while recalling each record $(c, \vec v')$ stored in list $\mathcal{C}$.
The search targets $(d, \vec v'')$ that can form a \emph{match}, which is true if and only if (see Eq.~\eqref{eq:SSMP_matching_criteria}): 
\begin{equation}
\label{eq:target_d}
    d \in [\vec w \cdot \vec a  - c - \epsilon, \vec w \cdot \vec a  - c + \epsilon]
\end{equation}

Suppose $\epsilon > 0$ and there exists a match $|\vec w \cdot \vec a - (\vec v' \cdot \vec b' + \vec v'' \cdot \vec b'')| \le \epsilon$. Rearranging these inequalities, letting $c = \vec v' \cdot \vec b'$ and $d = \vec v'' \cdot \vec b''$,
\begin{equation}\label{eq:round}
  \frac{\vec w \cdot \vec a - c}{\epsilon} - 1 \le \frac{d}{\epsilon} \le \frac{\vec w \cdot \vec a - c}{\epsilon} + 1
\end{equation}
Note that $(d, \vec v'')$ is added to associative array $h$ at $\round{\frac{d}{\epsilon}}$. Let $\hat d_r = \round{(\vec w \cdot \vec a - c) / \epsilon}$. Then Inequalities~(\ref{eq:round}) show that $\hat d_r - 1 \le \round{\frac{d}{\epsilon}} \le \hat d_r + 1$. Therefore, we will find $(d, \vec v '')$ in $h(\hat d_r - 1) \cup h(\hat d_r) \cup h(\hat d_r + 1)$.
Since $c = \vec v '' \cdot \vec b ''$ is one of the sums considered in $\mathcal{C}$ and we added $(d,\vec v'')$ to $h(\round{d/\epsilon})$, we will find the existing match as long as we check $h(\hat d_r - 1), h(\hat d_r), h(\hat d_r + 1)$, which we do in Alg.~\ref{alg:search_match}.

The case of $\epsilon = 0$ is much simpler. Since approximate matches are not allowed, we can simply check if $\vec w \cdot \vec a - c$ exists in $h$ and if so, we return any entry at that key.

\begin{theorem}\label{thm:search}
  If $\mathcal{A},\mathcal{B} \subseteq \mathbb{Z}$, the search algorithm in Alg.~\ref{alg:search_match} solves the $\text{SSMP}^-(\vec a,\vec b,\epsilon)$ problem exactly and runs in $O(2^{N-r} + (1+\epsilon)2^{M+r})$ time and $O(2^r + 2^{N-r})$ space.
\end{theorem}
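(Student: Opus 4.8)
The plan is to establish the claim in three parts: correctness of the decision output, the running time, and the space usage. Correctness itself splits into \emph{soundness} (anything returned is a genuine match) and \emph{completeness} (if a match exists, one is returned). Most of the completeness reasoning is already carried out in the discussion preceding the statement, so I would package that argument and then supply the two complexity bounds, where the running time is the delicate part.

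For completeness, I would take an arbitrary valid match $\langle \vec a_{\vec w}, \vec b_{\vec v}\rangle$ and split its $\vec b$-side inclusion vector at the cut point $r$ as $\vec v = \vec v' \oplus \vec v''$ with $\vec v' \in \{0,1\}^r$ and $\vec v'' \in \{0,1\}^{N-r}$. Since the preprocessing enumerates \emph{all} subsets of $\vec b'$, the pair $(c,\vec v')$ with $c=\vec v'\cdot\vec b'$ lies in $\mathcal{C}$, and likewise $(d,\vec v'')$ with $d=\vec v''\cdot\vec b''$ is inserted into $h$ at key $\round{d/\epsilon}$ (resp.\ key $d$ when $\epsilon=0$). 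When the outer loop reaches this $\vec w$ and this $(c,\vec v')$, the rounding Inequalities~\eqref{eq:round} guarantee $\round{d/\epsilon}\in\{\hat d_r-1,\hat d_r,\hat d_r+1\}$, so $(d,\vec v'')$ is scanned and passes the test $d\in[\hat d-\epsilon,\hat d+\epsilon]$; hence a match is returned. For soundness, any returned triple satisfies $d\in[\hat d-\epsilon,\hat d+\epsilon]$ with $\hat d=\vec w\cdot\vec a-c$, which rearranges to $|\vec w\cdot\vec a-(c+d)|\le\epsilon$, i.e.\ $|\vec w\cdot\vec a-\vec v\cdot\vec b|\le\epsilon$ for $\vec v=\vec v'\oplus\vec v''$, while $\vec w\ne\allzeros$ holds by the loop range. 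The one gap is the degenerate case $\vec v'\oplus\vec v''=\allzeros$ (possible when a nonempty $\vec a$-subset already sums to within $\epsilon$ of $0$), which I would rule out by skipping the empty-on-both-sides entry; with this one-line guard the returned object always satisfies $f_{\text{SSMP}}$.

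For the running time I would account for the three phases. Building $\mathcal{C}$ touches $2^r$ subsets and building $h$ touches $2^{N-r}$ subsets; enumerating each family in Gray-code order makes every successive subset sum an $O(1)$ update, so these phases cost $O(2^r)$ and $O(2^{N-r})$. The main loop ranges over $\vec w\in\{0,1\}^M\setminus\{\allzeros\}$ and $(c,\vec v')\in\mathcal{C}$, i.e.\ $O(2^{M+r})$ pairs, each performing an $O(1)$ computation of $\hat d$ and $\hat d_r$ followed by a scan of at most three buckets of $h$. The crux is bounding this per-pair bucket work by $O(1+\epsilon)$. Here I use $\mathcal{B}\subseteq\mathbb{Z}$: every stored sum $d$ is an integer, and the key $\round{d/\epsilon}=k$ forces $d$ into the half-open interval $[\epsilon(k-\tfrac12),\epsilon(k+\tfrac12))$ of length $\epsilon$, which contains at most $\lfloor\epsilon\rfloor+1=O(1+\epsilon)$ integers; keeping a single witness per distinct sum therefore caps each bucket at $O(1+\epsilon)$ entries, so three buckets cost $O(1+\epsilon)$ per pair and $O((1+\epsilon)2^{M+r})$ in total. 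When $\epsilon=0$ the loop inspects a single exact key, so the per-pair cost is $O(1)$, consistent with the bound. Summing the phases and absorbing $2^r$ into $2^{M+r}$ yields $O(2^{N-r}+(1+\epsilon)2^{M+r})$.

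The space bound is the most direct: $\mathcal{C}$ holds $2^r$ records and $h$ holds at most $2^{N-r}$ records, giving $O(2^r+2^{N-r})$; crucially the subsets of $\vec a$ are generated on the fly inside the loop rather than stored, so no $2^M$ term appears. I expect the main obstacle to be the time analysis of the inner scan: the $(1+\epsilon)$ factor hinges precisely on the integrality of $\mathcal{B}$ together with the distinct-sum bookkeeping in $h$, and without either ingredient a single bucket could in principle hold $\Theta(2^{N-r})$ entries and destroy the bound. The correctness arguments, by contrast, go through for real-valued inputs as well, so integrality is really needed only to certify the claimed runtime.
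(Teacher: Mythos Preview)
Your proposal is correct and follows essentially the same three-part structure as the paper's proof (correctness via the pre-theorem discussion, time via counting outer-loop iterations times an $O(1+\epsilon)$ bucket bound from integrality, space via $|\mathcal{C}|+|h|$). If anything you are more careful than the paper: you separate soundness from completeness, supply the Gray-code justification for $O(1)$ incremental sums, make the $O(1+\epsilon)$ bucket-size argument explicit, and flag the $\vec v=\allzeros$ corner case that the paper does not address.
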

The proof of the preceding theorem is given in Appendix~\ref{app:skipped}.

\begin{remark}\label{rem:positive-ints}
  We only focus on the integer case in Theorem~\ref{thm:search} as the main application of interest is in financial reconciliation where the data has a constant number of significant digits. Note that the running time depends on the precision as data with higher precision leads to larger numbers of entries per key in $h$ (in the integer case it is only $O(\epsilon)$).
\end{remark}

The search algorithm does not explicitly attempt to make smaller matches as is desired in the objective function. However, one can heuristically do this by considering the smallest sets first in the outer for loop.

\subsubsection{Comparison to Naive Algorithms}\label{sec:naive}
Two natural naive algorithms one could consider for this problem use the extreme values of $r$. When $r = 0$, the algorithm caches everything in the associative array $h$ (could also be implemented as a balanced binary search tree, but this would increase running times). When $r = N$, no associative array $h$ is used, and the algorithm simply iterates over all subset of sums of $\vec b$. In a sense, for $r \in (0, N)$, our algorithm is an ``interpolation" of these two naive algorithms.

We choose a value of $r$ to minimize the running time in Theorem~\ref{thm:search} when $\epsilon = 0$: $r = \frac{N-M}{2}$. This leads to a running time of $O((1+\epsilon)2^{(M+N)/2})$ and $O(2^{(M+N)/2})$ space.

In many cases, our algorithm is \emph{exponentially} better in time and space compared to the naive algorithms. A standard regime of $N$ and $M$ for financial reconciliation is when $N \gg M$. For simplicity, assume $\epsilon = 0$ and $\log N = M$. In this case, for $r = 0$, the running time is $O(2^N)$ and for $r = N$, the running time is $O(2^{N + \log N})$. The running time of our algorithm for $r = \frac{N-M}{2}$ is  $O(2^{(N + \log N)/2})$. This is exponentially better than both running times. The same holds for the space guarantees.

\subsection{Dynamic Programming Solver for $\text{SSMP}^-$}\label{sec:dp-solver}
Dynamic programming (DP) for $\text{SSMP}^-$ is a pseudo-polynomial time solver. Like the search-based approach, this DP solver is also a method for finding potential \emph{matches} and then validate them in a post-hoc manner. This algorithm involves three stages: (a) discretisation and element reorganisation to form a proxy integer problem, (b) tabulation, i.e., building DP tables and (c) backtracking until a valid \emph{match} is found. The DP solver is shown in Alg.~\ref{alg:dp_match} and Fig.~\ref{fig:dp} is an illustration of the algorithm with an example. 

\begin{figure}[!hbt]
\centering
\includegraphics[width=0.45\textwidth]{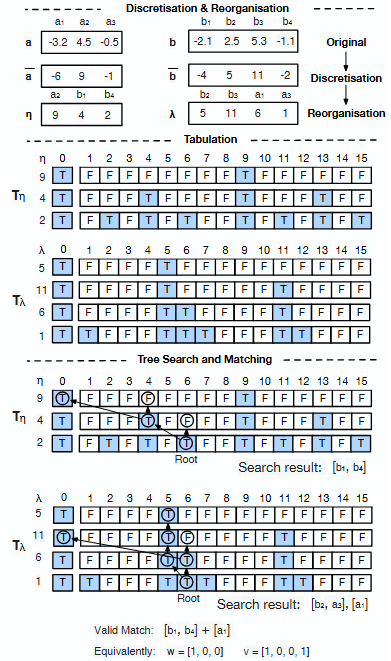}
\caption{\label{fig:dp} Illustration of the DP approach with the same example as shown in Fig.~\ref{fig:search}. \textbf{Discretisation \& Element Reorganisation} In this example converting real numbers to integers via $\rho = 2$. \textbf{Tabulation} Two tables are created from $\vec \eta = [9,4,2], \vec \lambda = [5, 11, 6, 1]$. \textbf{Tree Search and Matching}: We present an example of a tree structure. The first tree with root node $\mat T_{\vec \eta}[3,6]$ returns one subset of $\vec \eta$ which links to $[b_1, b_4]$. The tree starting from $\mat T_{\vec \lambda}[4,6]$ returns 2 subsets of $\vec \lambda$ linking to $[b_2,a_3]$ and $[a_1]$. After validation, only the combination of $[b_1, b_4]$ and $[a_1]$ makes a valid match. Equivalently, $\vec w = [1,0,0], \vec v = [1,0,0,1]$.}
\end{figure}
\begin{algorithm}[!htb]
\caption{Dynamic Programming Solver for $\text{SSMP}^-$}
\label{alg:dp_match}
\textbf{Input}:  $\text{SSMP}^-(\vec a, \vec b, \epsilon)$\\
\textbf{Output}: $s^-$
\begin{algorithmic}[1] 
\State Form $\text{SSMP}^-(\vec \eta, \vec \lambda, \bar \epsilon)$ with discretisation \& reorganisation
\State Build and update tables $\mat T_{\vec \eta}$, $\mat T_{\vec \lambda}$
\For{$0 \le e \le \bar \epsilon$}
\State Find matched integer sums $\mathcal H_e $ from $\mat T_{\vec \eta}$, $\mat T_{\vec \lambda}$
\For{each $ (i, j) \in \mathcal{H}_e$}
\State Collect qualified subsets from tree search:\\
\quad\quad\quad $J = \{\vec \eta_{\vec p}: \vec \eta \cdot \vec p = i\}, \Lambda = \{\vec \lambda_{\vec q}: \vec \eta \cdot \vec q = j\}$
\For{$\vec \eta_{\vec p} \in J$, $\vec \lambda_{\vec q} \in \Lambda$ }
\State Recover $\langle \vec a_{\vec w}, \vec b_{\vec v} \rangle$ from $\langle \vec \eta_{\vec p}, \vec \lambda_{\vec q} \rangle$
\If{$\langle \vec a_{\vec w}, \vec b_{\vec v} \rangle$ is a valid \emph{match}}
\State \textbf{return} $s^- \leftarrow \{\langle \vec a_{\vec w}, \vec b_{\vec v} \rangle$\}
\EndIf
\EndFor
\EndFor
\EndFor
\State \textbf{return} $s^- \gets \perp$
\end{algorithmic}
\end{algorithm}

\subsubsection{Discretisation and Element Reorganisation}

We use $\bar{\vec a} = [\bar a_1, ...,\bar a_M]$ and $\bar{\vec b} = [\bar b_1, ..., \bar b_N]$ to represent the elements of $\vec a, \vec b$ after applying element-wise discretisation $\bar{\vec a}_m = \round{\rho \vec a_m}, \bar{\vec b}_n = \round{\rho \vec b_n}$ where $\rho > 0$ is a scale (rounding and scaling are not required if the inputs are integers). Then, we reorganise all the elements in $\vec{ \bar a}, \vec{ \bar b}$ into two groups of positive integers:
\begin{align}
    \vec \eta &= [\bar a: \bar a > 0, \bar a \in \vec{ \bar a}] \oplus [-\bar b: \bar b <0 , \bar b \in \vec{ \bar b}] \\
    \vec \lambda &= [-\bar a: \bar a < 0, \bar a \in \vec{ \bar a}] \oplus [\bar b: \bar b > 0 , \bar b \in \vec{ \bar b}]
\end{align}
We use $\vec \eta$ and $\vec \lambda$ to form a new problem $\text{SSMP}^-(\vec \eta, \vec \lambda, \bar \epsilon)$, where each element in $\vec \eta$ and $\vec \lambda$ corresponds to one and only one element in $\vec a$ or $\vec b$. Let $M' = |\vec \eta|$ and $N' = |\vec \lambda|$, we have $ M'+ N' = M+N$. The new matching threshold $\bar \epsilon$ is chosen to be the smallest integer that guarantees every feasible solution of the original $\text{SSMP}^-(\vec a, \vec b, \epsilon)$ can be recovered from a feasible solution of $\text{SSMP}^-(\vec \eta, \vec \lambda, \bar \epsilon)$.

\begin{proposition}\label{prop:dp-mapping}
  Every valid match $\langle \vec a_w, \vec b_v \rangle$ in $\text{SSMP}^-(\vec a,\vec b,\epsilon)$ corresponds to a valid match $\langle \vec \eta_{w'}, \vec \lambda_{v'} \rangle$ in $\text{SSMP}^-(\vec{\eta}, \vec{\lambda}, \bar{\epsilon})$ where $\bar \epsilon = \lceil \rho \epsilon + (M+N)/2 \rceil$. If $\mathcal A, \mathcal B\subseteq \mathbb{Z}$, suffices to set $\bar \epsilon = \epsilon$ where $\rho = 1$.
  
  Given  $\langle \vec \eta_{w'}, \vec \lambda_{v'}\rangle$,  $\langle\vec a_w, \vec b_v\rangle$ can be recovered in $O(M+N)$ time.
\end{proposition}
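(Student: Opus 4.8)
The plan is to build an explicit, sign-based correspondence between the selected elements of the original match and the coordinates of $\vec \eta,\vec \lambda$, and then to control the discretisation error with a single triangle-inequality estimate. The reorganisation sends each $\bar a_m>0$ and each $-\bar b_n$ with $\bar b_n<0$ into $\vec \eta$, and each $-\bar a_m$ with $\bar a_m<0$ and each $\bar b_n>0$ into $\vec \lambda$; this is a bijection between the $M+N$ original indices and the coordinates of $\vec \eta,\vec \lambda$. Given a valid match $\langle \vec a_{\vec w},\vec b_{\vec v}\rangle$, I would define $\vec p,\vec q$ (the reduced inclusion vectors $w',v'$) to select exactly those $\vec \eta$- resp. $\vec \lambda$-coordinates whose preimage is selected by $\vec w$ or $\vec v$. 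The first step is then the algebraic identity
\[
  \vec p\cdot\vec \eta-\vec q\cdot\vec \lambda=\vec w\cdot\bar{\vec a}-\vec v\cdot\bar{\vec b},
\]
obtained by splitting each dot product into its positive and negative parts and matching them to the two groups; after this, everything reduces to bounding the right-hand side.

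For the error bound I would write $\bar a_m=\rho a_m+\delta_m$ and $\bar b_n=\rho b_n+\gamma_n$ with $|\delta_m|,|\gamma_n|\le\tfrac12$ by definition of nearest-integer rounding, so that
\[
  \vec w\cdot\bar{\vec a}-\vec v\cdot\bar{\vec b}=\rho\bigl(\vec w\cdot\vec a-\vec v\cdot\vec b\bigr)+\textstyle\sum_{m:w_m=1}\delta_m-\sum_{n:v_n=1}\gamma_n .
\]
Taking absolute values and using validity of the original match ($|\vec w\cdot\vec a-\vec v\cdot\vec b|\le\epsilon$) together with $\|\vec w\|_1\le M$ and $\|\vec v\|_1\le N$ yields $|\vec p\cdot\vec \eta-\vec q\cdot\vec \lambda|\le\rho\epsilon+(M+N)/2$. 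Since $\vec \eta,\vec \lambda$ are integer vectors the left-hand side is an integer, so it is in fact at most $\lceil\rho\epsilon+(M+N)/2\rceil=\bar\epsilon$, which is exactly the tolerance of the reduced problem; hence $\langle\vec \eta_{\vec p},\vec \lambda_{\vec q}\rangle$ satisfies the numeric matching criterion.

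The integer special case is immediate: with $\rho=1$ and integer inputs every $\delta_m,\gamma_n$ vanishes, the identity collapses to $\vec p\cdot\vec \eta-\vec q\cdot\vec \lambda=\vec w\cdot\vec a-\vec v\cdot\vec b$, and the single tolerance $\bar\epsilon=\epsilon$ suffices. For the recovery claim, I would store the bijection used to form $\vec \eta,\vec \lambda$ as a table of (side, index, sign) triples; inverting $\langle\vec \eta_{\vec p},\vec \lambda_{\vec q}\rangle$ then amounts to reading off, for each selected coordinate, its original index and setting the corresponding entry of $\vec w$ or $\vec v$ to $1$, a single linear pass costing $O(M+N)$.

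The step I expect to be most delicate is the sign bookkeeping in the first identity — verifying which of the four groups (positive/negative entries of $\bar{\vec a}$ and $\bar{\vec b}$) lands in $\vec \eta$ versus $\vec \lambda$ and that the cancellation is clean — and, relatedly, confirming the non-degeneracy requirement $\vec p,\vec q\neq\allzeros$ inherited from $f_{\text{SSMP}}$. The only danger is an original match all of whose selected elements fall into a single group, which would leave one of $\vec p,\vec q$ empty even though $\vec w,\vec v\neq\allzeros$; I would treat this case separately, arguing that it is captured by allowing the empty subset sum on the opposite side (which the tabulation in Alg.~\ref{alg:dp_match} includes) and is in any event filtered by the post-hoc validation. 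Aside from this, the remaining arguments are routine.
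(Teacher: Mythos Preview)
Your proposal is correct and follows essentially the same route as the paper: establish the bijection, prove the identity $\vec p\cdot\vec\eta-\vec q\cdot\vec\lambda=\vec w\cdot\bar{\vec a}-\vec v\cdot\bar{\vec b}$ by sign-splitting, then bound the right-hand side via the triangle inequality and the $\tfrac12$ rounding error per coordinate. Your explicit use of integrality to justify the ceiling, and your flagging of the $\vec p,\vec q\neq\allzeros$ edge case, are both points the paper's proof passes over silently; otherwise the arguments coincide.
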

The proof of the preceding proposition is in Appendix~\ref{app:skipped}.

\begin{table*}[!tb]
    \centering
                \begin{tabular}{c c c| cc | cc | cc } 
                     \toprule 
                     \multicolumn{3}{c}{Task Parameters} & 
                     \multicolumn{2}{c}{Optimisation} & \multicolumn{2}{c}{Search} & \multicolumn{2}{c}{DP} \\
                     \cmidrule(lr){1-3} \cmidrule(lr){4-5}  \cmidrule(lr){6-7} \cmidrule(lr){8-9} 
                    M & N & $\gamma$ & Measure & Run-time (s) &  Measure & Run-time (s)  &  Measure & Run-time (s)   \\
                     \midrule
10 & 10 & $10^4$ & 14.6 (2.0) & 7.5 (4.9) & 7.9 (1.6) & 0.469 (0.201) & 9.2 (3.8) & 0.108 (0.039) \\
10 & 15 & $10^4$ & 18.1\,(2.6)$^*$ & time-out & 16.2 (3.2) & 0.194 (0.084) & 15.5 (2.4) & 0.170 (0.057) \\
10 & 20 & $10^4$ & 22.4\,(2.8)$^*$ & time-out & 21.2 (3.8) & 0.417 (0.609) & 23.4 (3.1) & 0.257 (0.068) \\
10 & 25 & $10^4$ & 26.7\,(3.7)$^*$ & time-out & 24.1 (3.6) & 1.338 (0.110) & 27.0 (2.4) & 0.394 (0.060) \\
10 & 30 & $10^4$ & 31.5\,(2.7)$^*$ & time-out & 31.0 (2.8) & 5.555 (0.332) & 31.3 (4.7) & 0.617 (0.061) \\
\midrule
10 & 20 & $10^2$ & 31.8\,(3.2)$^*$ & time-out & 27.6 (3.8) & 0.220 (0.155) & 27.0 (3.7) & 0.033 (0.004) \\
10 & 20 & $10^3$ & 29.5\,(2.2)$^*$ & time-out & 23.7 (2.8) & 0.176 (0.016) & 24.2 (2.7) & 0.065 (0.011) \\
10 & 20 & $10^4$ & 23.0\,(3.3)$^*$ & time-out & 20.5 (2.9) & 0.348 (0.221) & 22.4 (2.9) & 0.202 (0.037) \\
10 & 20 & $10^5$ & 12.3\,(4.8)$^*$ & time-out & 17.4 (4.4) & 0.724 (0.305) & 16.4 (4.8) & 1.417 (0.199) \\
10 & 20 & $10^6$ & 1.7\,(3.4)$^*$ & time-out & 14.0 (3.1) & 1.754 (1.203) & 14.8 (4.7) & 17.077 (5.315) \\
                     \bottomrule
                \end{tabular}
                \caption{Performance measure and time consumption (`mean (std)') for integer SSMPs. $^*$: run-time allowance reached for (at least) some runs.}
                \label{table:integer_performance}
            \end{table*}
\begin{table*}[!tb]
    \centering
                \begin{tabular}{c c c| c c |c c | c c } 
                     \toprule 
                     \multicolumn{3}{c}{Task Parameters} & 
                     \multicolumn{2}{c}{Optimisation} & \multicolumn{2}{c}{Search} & \multicolumn{2}{c}{DP} \\
                     \cmidrule(lr){1-3} \cmidrule(lr){4-5}  \cmidrule(lr){6-7} \cmidrule(lr){8-9} 
                    M & N & $\epsilon$ & Measure & Run-time (s) &  Measure & Run-time (s)  &  Measure & Run-time (s)   \\
                     \midrule

10 & 10 & 1 & 22.5 (2.2) & 1.5 (0.1) & 18.8 (3.3) & 0.029 (0.003) & 19.1 (2.2) & 0.026 (0.003) \\
20 & 20 & 1 & 49.8\,(1.7)$^*$ & time-out & 39.8 (3.1) & 5.216 (0.360) & 41.9 (2.6) & 0.045 (0.005) \\
30 & 30 & 1 & 74.7\,(2.5)$^*$ & time-out & - (-) & time-out & 64.9 (3.4) & 0.089 (0.014) \\
50 & 50 & 1 & 123.6\,(6.8)$^*$ & time-out & - (-) & time-out & 111.5 (7.0) & 0.255 (0.039) \\
75 & 75 & 1 & 185.6\,(7.0)$^*$ & time-out & - (-) & time-out & 171.4 (6.1) & 1.335 (0.216) \\
100 & 100 & 1 & 244.7\,(12.6)$^*$ & time-out & - (-) & time-out & 233.0 (5.3) & 3.326 (0.364) \\

\midrule
10 & 10 & $10^{-4}$ & 7.0 (5.5) & 29.5 (29.6) & 6.4 (4.8) & 3.959 (2.926) & 5.4 (4.7) & 0.239 (0.323) \\
20 & 20 & $10^{-4}$ & 3.5\,(7.1)$^*$ & time-out & 28.3 (4.3) & 26.257 (14.037) & 27.2 (2.9) & 0.223 (0.161) \\
30 & 30 & $10^{-4}$ & 2.0\,(4.0)$^*$ & time-out & - (-) & time-out & 50.4 (5.2) & 0.703 (0.725) \\
50 & 50 & $10^{-4}$ & 7.7\,(8.6)$^*$ & time-out & - (-) & time-out & 94.4 (5.0) & 2.119 (1.737) \\
75 & 75 & $10^{-4}$ & 4.0\,(8.0)$^*$ & time-out & - (-) & time-out & 151.6 (6.2) & 8.952 (1.182) \\
100 & 100 & $10^{-4}$ & 2.7\,(5.6)$^*$ & time-out & - (-) & time-out & 209.6 (5.3) & 21.545 (1.124) \\
\midrule
10 & 10 & 0 & 0.0 (0.0) & 22.3 (21.7) & 0.0 (0.0) & 7.496 (0.227) & 0.0 (0.0) & 6.906 (1.201) \\
15 & 15 & 0 & 0.0\,(0.0)$^*$ & time-out & - (-) & time-out & 0.0 (0.0) & 34.576 (38.403) \\
20 & 20 & 0 & 0.0\,(0.0)$^*$ & time-out & - (-) & time-out & - (-) & time-out \\
                     \bottomrule
                \end{tabular}
                \caption{Performance measure and time consumption (`mean (std)') for real-value SSMPs. $^*$: run-time allowance reached for some runs.
                }
                \label{table:exp_performance}
            \end{table*}

\subsubsection{Tabulation}
We create two tables $\mat T_{\vec \eta}, \mat T_{\vec \lambda}$ for storing the feasibility of achieving a certain subset sum from elements in $\vec \eta, \vec \lambda$ respectively. Taking $\mat T_{\vec \eta}$ as an example, the number of rows is equal to $M'$ while the number of columns is $X+1$ where $X$ is the largest subset value that can be matched across sides, i.e., $X = \min(\sum_{m' = 1}^{M'} \vec \eta_{m'}, \sum_{n' = 1}^{N'} \vec \lambda_{n'})$. Each entry $\mat T_{\vec \eta}[m', i]$ represents whether the subset sum $i$ can be computed with the first $m'$ elements in $\vec \eta$ ($m' \in [1, M'], i \in [0, X]$). The recurrence relation for the table is as follows:
\begin{align}
\label{eq:rule}
    \mat T_{\vec \eta}[m',i] = \begin{cases} 
    False \quad \text{if} \quad m'=1 \quad \text{and} \quad  i \ne \eta_{1} \\ 
    True \quad \text{if}\quad  i = \eta_{m'} \quad \text{or} \quad i= 0 \\
    \mat T_{\vec \eta}[m'-1,i] \lor \mat T_{\vec \eta}[m'-1,i-\eta_{m'}] \quad o.w. 
    \end{cases}
\end{align}
The definitions of $\mat T_{\vec \eta}$, $\mat T_{\vec \lambda}$ match the definition of the recurrence relation used for the standard DP solution to the subset sum problem.

\subsubsection{Tree Search and Matching}\label{sec:dp-tree-search}
With the tables $\mat T_{\vec \eta}, \mat T_{\vec \lambda}$, the matching procedure compares the last row of each table to find the matched subset sums within tolerance $\bar \epsilon$:
\begin{equation}
    \mathcal{H} = \mathcal{H}_0 \cup \mathcal{H}_1 \cup \cdots \cup \mathcal{H}_{\bar \epsilon}
\end{equation}
where
\begin{equation}\label{eq:h_e}
    \mathcal{H}_e = \{(i, j): \mat T_{\vec \eta}[M', i] \land \mat T_{\vec \lambda} [N', j] = True, |i-j| = e \}
\end{equation}
Each $(i,j)$ satisfying this condition would lead to at least one subset in $(\vec \eta, \vec \lambda)$ whose sum is equal to $(i, j)$. The corresponding subset(s) can be discovered by back-tracking in the table via a binary search tree. This is nearly the standard backtracking approach in dynamic programming, with the main difference being we obtain all subsets matching the target. These binary search trees start with the root node located in the last row of the DP table and follow Eq.~\eqref{eq:rule} in the reverse direction; for a tree node $\mat T_{\vec \eta}[m', x]$, if its value is $True$, then $\mat T_{\vec \eta}[m'-1, x]$, $\mat T_{\vec \eta}[m'-1, x-\eta_{m'}]$ are its two child nodes (if it exists). A pruning rule is used to terminate searching that branch when: (a) the value of a node is $False$ or (b) the column index of the node in the table is 0. Fig.~\ref{fig:dp} shows examples of such binary trees. The search path leading to each leaf node whose value is $True$ uniquely links to a subset corresponding to the target subset sum. 

For each pair $(i,j) \in \mathcal{H}$, a binary tree search starting at $\mat T_{\vec \eta}[M',i]$ and $\mat T_{\vec \lambda}[N',j]$ results in two groups of subsets and each pair between the two groups is a possible feasible solution. Let  $\langle \vec \eta_{\vec p},  \vec \lambda_{\vec q} \rangle$ denote an arbitrary combination between the two groups. $\vec p \in \{0,1\}^{M'}$ and $ \vec q \in \{0,1\}^{N'}$ are the vectors for describing subsets of $\vec \eta$ and $\vec \lambda$, respectively, that link to a potential \emph{match} $\langle \vec a_{\vec w}, \vec b_{\vec v} \rangle$ (by reversing the reorganisation process) to the original $\text{SSMP}^-(\vec a, \vec b, \epsilon)$. 

\begin{theorem}\label{thm:dp}
  The DP algorithm solves $\text{SSMP}^-(\vec a, \vec b, \epsilon)$ exactly. When $\mathcal A, \mathcal B \subseteq \mathbb N$, letting $X = \min(\sum_{m = 1}^{M} \vec a_m, \sum_{n = 1}^{N} \vec b_{n})$, the algorithm runs in $O((M+N) \cdot X)$ time and
  $O((M+N)\cdot X)$ space.
\end{theorem}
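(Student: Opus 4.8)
The plan is to prove the two claims separately: \textbf{exactness} of the algorithm (for arbitrary real inputs) and the \textbf{time/space bounds} (for the natural-number case). For exactness I would reduce to Proposition~\ref{prop:dp-mapping} together with the standard correctness of the subset-sum recurrence. First I would argue \emph{soundness}: the algorithm outputs $\langle \vec a_{\vec w}, \vec b_{\vec v}\rangle$ only after the explicit check that it is a valid match of the original $\text{SSMP}^-(\vec a,\vec b,\epsilon)$, so any non-$\perp$ output is correct, and $\perp$ is returned only once both loops are exhausted. The real work lies in \emph{completeness}: if a valid match exists, one is found.

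Suppose $\langle \vec a_{\vec w^\star}, \vec b_{\vec v^\star}\rangle$ is a valid match. By Proposition~\ref{prop:dp-mapping} it corresponds to a valid match $\langle \vec \eta_{\vec p^\star}, \vec \lambda_{\vec q^\star}\rangle$ of $\text{SSMP}^-(\vec\eta,\vec\lambda,\bar\epsilon)$; writing $i^\star=\vec\eta\cdot\vec p^\star$ and $j^\star=\vec\lambda\cdot\vec q^\star$ we have $|i^\star-j^\star|\le\bar\epsilon$. Here I would invoke the standard induction on rows establishing that $\mat T_{\vec\eta}[M',i]$ is $True$ iff some subset of $\vec\eta$ sums to $i$ (and likewise for $\mat T_{\vec\lambda}$), which is exactly the recurrence~\eqref{eq:rule}. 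Hence both $\mat T_{\vec\eta}[M',i^\star]$ and $\mat T_{\vec\lambda}[N',j^\star]$ are $True$, so $(i^\star,j^\star)\in\mathcal{H}_{e}$ with $e=|i^\star-j^\star|$ and is enumerated by the outer loops. I would then argue that the tree search is \emph{exhaustive}: following the reverse recurrence from $\mat T_{\vec\eta}[M',i^\star]$, every root-to-leaf path ending at a $True$ leaf yields a distinct subset summing to $i^\star$, and every such subset arises this way; therefore $\vec\eta_{\vec p^\star}\in J$ and $\vec\lambda_{\vec q^\star}\in\Lambda$. The algorithm then recovers the original pair in $O(M+N)$ (Proposition~\ref{prop:dp-mapping}), validates it, and returns it, closing completeness. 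The point to get right is that enumerating \emph{all} subsets (not just one) is what guarantees the specific preimage $\langle\vec a_{\vec w^\star},\vec b_{\vec v^\star}\rangle$ is tested, which matters in the real-valued case since a candidate integer pair need not validate in the original instance.

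For the complexity bound I would specialise to $\mathcal A,\mathcal B\subseteq\mathbb N$. By Proposition~\ref{prop:dp-mapping} we may take $\rho=1$ and $\bar\epsilon=\epsilon$ with no rounding, and since all entries are positive the reorganisation degenerates to $\vec\eta=\vec a$, $\vec\lambda=\vec b$, $M'=M$, $N'=N$. Then $\mat T_{\vec\eta}$ has $M$ rows and $X+1$ columns and $\mat T_{\vec\lambda}$ has $N$ rows and $X+1$ columns, where $X=\min(\sum_m a_m,\sum_n b_n)$; each entry is filled in $O(1)$ by~\eqref{eq:rule}, so tabulation costs $O((M+N)\cdot X)$ time and space. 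It remains to bound matching and backtracking: scanning the last rows for an achievable within-$\epsilon$ pair $(i,j)$ with $i,j\ge 1$ takes $O(X)$ (e.g.\ a two-pointer sweep over the achievable sums on each side, independent of $\epsilon$), and for integer inputs any such pair already certifies a match, since $i=\vec w\cdot\vec a$, $j=\vec v\cdot\vec b$ with $\vec w,\vec v\ne\allzeros$ and $|i-j|\le\epsilon$. Thus a single backtracking path of length $O(M+N)$ recovers one match and the algorithm returns; if no such pair exists, $\perp$ is returned after the $O(X)$ scan. Both are dominated by tabulation, giving $O((M+N)\cdot X)$ time and space.

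The main obstacle I anticipate is controlling the cost of the tree search: in general the number of subsets reaching a fixed target sum can be exponential, so the literal ``collect all subsets'' reading of the algorithm does not by itself yield a polynomial bound. The plan resolves this by separating concerns — exhaustive enumeration is needed only to justify \emph{exactness} (where no running-time claim is made in the real case), while the $O((M+N)\cdot X)$ bound relies on the observation that, for integer inputs, the first nonempty achievable within-$\epsilon$ pair already certifies a match, so a single backtracking suffices and the potentially exponential enumeration is never triggered.
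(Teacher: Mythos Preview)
Your proposal is correct and follows essentially the same approach as the paper: exactness via Proposition~\ref{prop:dp-mapping} plus exhaustive tree search, and the complexity bound via the observation that for $\mathcal A,\mathcal B\subseteq\mathbb N$ one has $\vec\eta=\vec a$, $\vec\lambda=\vec b$, $\bar\epsilon=\epsilon$, so any achievable pair $(i,j)\in\mathcal H$ is already a valid match and a single backtracking suffices. Your bookkeeping is in fact slightly tighter than the paper's (you give $O(X)$ for the row scan and $O(M+N)$ for one backtracking path, whereas the paper quotes looser $O(M+N)$ and $O((M{+}N)\cdot X)$ there, respectively), but both collapse to the same dominating $O((M{+}N)\cdot X)$.
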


The proof of the preceding theorem is in Appendix~\ref{app:skipped}.

\begin{remark}
  Generalizing time and space bounds in Theorem~\ref{thm:dp} beyond positive integers results in an exponential running time. The reason for this is that the converse of Proposition~\ref{prop:dp-mapping} is false: a solution in $\text{SSMP}^-(\vec \eta, \vec \lambda, \bar{\epsilon})$ does not necessarily correspond to a solution in $\text{SSMP}^-(\vec a, \vec b, \bar \epsilon)$. When the input is thus not positive integers, the algorithm must enumerate \emph{all possible solutions} in $\text{SSMP}^-(\vec \eta,\vec \lambda,\bar \epsilon)$.
  We again emphasize the positive integer case is important as it directly applies to financial reconciliation (see Remark~\ref{rem:positive-ints}).
\end{remark}

\section{Experiments and Results}

This section provides an empirical evaluation of the techniques introduced in this paper, evaluating both integer problems and real-value problems. For each problem configuration, we randomly generated 10 problems and set the run-time allowance to 90 seconds for solving each problem (unless otherwise stated). When run-time allowance is reached, solvers either return the current best solution (optimisation) or failure (search and DP). The machine used to run experiments has an Intel(R) Xeon(R) CPU E3-1585L v5 @ 3.00GHz with 64 GB of RAM. The optimal solver is built using CPLEX 20.1.0.  

\subsection{Integer Problems}
We first test algorithms on integer problems -- $\text{SSMP}(\vec a, \vec b, 0)$ with $\vec a$ and $\vec b$ containing integers uniformly distributed (i.i.d) in $[-\gamma, \gamma]$ where $\gamma \in \mathbb{Z}^+$ is a hyper-parameter. We aim to compare the performance and time consumption under (a) different problem scales, i.e., adjusting number of total elements by varying $N$ and (b) different scales by varying $\gamma$. Hyper-parameters for solving integer problems: 
\begin{itemize}
    \item Search solver: $r = (N - M)/2$.
    \item  DP solver: $\rho = 1$, $\bar \epsilon = \epsilon = 0$ (no discretisation).
\end{itemize}
The choice of $r$ minimizes running time for the search algorithm when $\epsilon = 0$, as shown in Section~\ref{sec:naive}. As this section focuses on integers, there is no need to discretize, so it suffices to set $\rho = 1$.

Table~\ref{table:integer_performance} shows the results of both the performance measure (Eq.~\eqref{eq:optimal}) and solving time for integer-value problems.
When the optimisation solver converges before time-out, it outperforms the other two methods by finding the optimal solution. 
For integer problems, the search solver is more sensitive to the total number of elements in the input -- for double the input size from $M+N = 20$ to $M+N = 40$, the time consumption increased over 11 times from 0.46s to 5.5s. By contrast, the DP solver only varies from 0.1s to 0.62s (6 times), which is expected due to the better running time guarantees. When the problem scale is fixed ($M+N = 30$), the distribution of elements also influences the performance and time. Firstly, problems with smaller value range tend to have more matches (independently of the solver). Secondly, the range of values given by $\gamma$ greatly influences the time complexity of DP as the size of tables are much larger, while it almost has no impact on the search solver. 

\subsection{Real-Value Problems}

Our second set of experiments aims at assessing the scalability and optimality of the different algorithms, as well as the additional complexity brought by real values and a non-zero threshold $\epsilon$. 
We generated six different problem configurations by selecting $M, N \in \{ 10,20,30,50,75,100 \}$ since, in our real-world application of financial reconciliation, the size of a majority of the SSMP problems one faces is less than 100. Both $\vec a$ and $\vec b$ contain real numbers independently and uniformly distributed in $[-100, 100]$. 
Each problem is solved under different matching thresholds: (a) $\epsilon=1$, where we expect many numbers to be matched, (b) $\epsilon = 10^{-4}$, where we expect a mix of matched and unmatched elements, and (c) $\epsilon = 0$, where we expect the number of potential matches to be close to $0$. Hyper-parameters of algorithms to solve real-value problems:
\begin{itemize}
    \item Search solver: $r = (N-M)/2$.
    \item DP solver: $\rho = 1$ for $\epsilon = 1$, $\rho = 10$ for $\epsilon = 10^{-4}$ and $\rho = 10000$ for $\epsilon = 0$. 
\end{itemize}
Even though $\epsilon > 0$ in some cases, it is a small constant  and thus the optimal parameter for $r$ is not significantly different from $\frac{N-M}{2}$ so we keep the same value of $r$.
We found the results not particularly sensitive to the choice of $\rho$, but having $\rho$ increase as $\epsilon$ decreases is a natural guiding principle that was necessary to obtain good results.

Results are shown in Table~\ref{table:exp_performance}.  In general, from the performance perspective, the search and DP solvers show similar performance over their solutions when search does not timeout. The optimisation solver out-performs others with a large $\epsilon$ $( = 1)$ even when it returns sub-optimal solutions for large problems ($M, N \ge 30$). 
However, when a lower $\epsilon$ value is used, SSMP problems have a mixture of matched and unmatched elements ($\epsilon = 10^{-4}$), and the MILP solver struggles to scale up.
Thus, it returns significantly worse results.

With respect to time, the search solver is fast in tiny problems. However, due to its exponential time complexity, it times out before finishing the pre-calculation and caching stage for problems $M,N \ge 30$. The optimal solver also times out on all but the smallest of problems, although it benefits from returning a solution even if it does not finish running. DP shows dominating performance and efficiency in problems with relatively larger size ($M, N \ge 20$) whereas both the optimisation and search solvers fail on these problems, especially when $\epsilon = 10^{-4}$. When $\epsilon = 0$, we aim to check the scalability of these solvers when no matches exist in a problem.  All solvers fail to prove that there is no non-empty solution for some problems within the 90s time limit for $M, N \ge 20$. 

\subsection{Further Exploration}
Based on the experimental results we obtained so far, we found the optimiser algorithm, as expected, is progressing slowly and struggling with harder problems. We designed two additional experiments under the same experimental settings to further explore the solving capability of the optimisation solver by providing (a) additional 90s run-time (180s in total) and (b) a warm start by initialising it with the DP solution. Tables~\ref{table:additional_1} and~\ref{table:additional_2} show selected results under integer and real-value SSMPs. For both integer and real-value problems, doubling the time allowance showed a modest increase in performance (although still not finding the optimal solution in most cases). We also see that compared with the DP results in Table~\ref{table:exp_performance}, the optimisation solver seems not to be improving by using a warm start. 

\begin{table}[!tb]
    \centering
            \small    \begin{tabular}{c c c| c c c } 
                     \toprule 
                     \multicolumn{3}{c}{Parameters} & 
                     \multicolumn{3}{c}{Measure (Optimisation)} \\
                     \cmidrule(lr){1-3} \cmidrule(lr){4-6}
                    M & N & $\gamma$ & original &  + extra 90s  &  + warm start   \\
                     \midrule
10 & 10 & $10^4$ & 14.6 (2.0) & 14.6 (2.0) & 9.2 (3.8) \\
10 & 15 & $10^4$ & 18.1\,(2.6)$^*$ & 19.6\,(1.9)$^*$ & 15.5 (2.4) \\
10 & 20 & $10^4$ & 22.4\,(2.8)$^*$ & 24.2\,(1.7)$^*$ & 23.4 (3.1) \\
10 & 25 & $10^4$ & 26.7\,(3.7)$^*$ & 28.1\,(2.9)$^*$ & 27.0 (2.4) \\
10 & 30 & $10^4$ & 31.5\,(2.7)$^*$ & 31.5\,(2.7)$^*$ & 31.3 (4.7) \\
                     \bottomrule
                \end{tabular}
                \caption{Performance of optimisation solver for integer SSMPs as in Table \ref{table:integer_performance}. We compare  results with 90s time allowance (original), 180s time allowance (+ extra 90s) and 90s allowance with warm start (+warm start).}
                \label{table:additional_1}
            \end{table}
\begin{table}[!tb]
    \centering
               \small \begin{tabular}{c c c| c c c } 
                     \toprule 
                     \multicolumn{3}{c}{Parameters} & 
                     \multicolumn{3}{c}{Measure (Optimisation)} \\
                     \cmidrule(lr){1-3} \cmidrule(lr){4-6}
                    M & N & $\epsilon$ & original &  + extra 90s  &  + warm start   \\
                     \midrule
10 & 10 & $10^{-4}$ & 7.0 (5.5) & 7.0 (5.5) & 6.0 (4.3) \\
20 & 20 & $10^{-4}$ & 3.5\,(7.1)$^*$ & 3.5\,(7.1)$^*$ & 27.2 (2.9) \\
30 & 30 & $10^{-4}$ & 2.0\,(4.0)$^*$ & 3.0\,(5.2)$^*$ & 50.4 (5.2) \\
50 & 50 & $10^{-4}$ & 7.7\,(8.6)$^*$ & 7.7\,(8.6)$^*$ & 94.4 (5.0) \\
75 & 75 & $10^{-4}$ & 4.0\,(8.0)$^*$ & 4.0\,(8.0)$^*$ & 151.6 (6.2) \\
100 & 100 & $10^{-4}$ & 2.7\,(5.6)$^*$ & 3.5\,(7.1)$^*$ & 209.6 (5.3) \\
                \bottomrule
                \end{tabular}
                \caption{Performance of optimisation solver for real-value SSMPs. We compare results with 90s time allowance (original), 180s time allowance (+extra 90s) and 90s allowance with warm start (+warm start). }
                \label{table:additional_2}
            \end{table}

\section{Conclusions and Discussion}
This paper introduced a new CO problem, SSMP, inspired by real-world financial reconciliation tasks. We presented three different algorithms that range from sub-optimal (greedy search and dynamic programming) to optimal (mixed integer linear programming). We also generated a benchmark that can be reused by later works and carried out experiments to compare the different algorithms. 

Potential future work includes developing: (a) more efficient algorithms for SSMP and (b) real-world applications with SSMP algorithms as the core for tasks such as financial reconciliation. Although we focus on SSMP, we believe the SMP framework also fits other real-world scenarios -- for example, a task assignment problem matching workers and tasks given some skills, or student-school matching given preference criteria.

\section{Disclaimer}
This paper was prepared for informational purposes in part by
the Artificial Intelligence Research group of JPMorgan Chase \& Co. and its affiliates (``JP Morgan''),
and is not a product of the Research Department of JP Morgan.
JP Morgan makes no representation and warranty whatsoever and disclaims all liability,
for the completeness, accuracy or reliability of the information contained herein.
This document is not intended as investment research or investment advice, or a recommendation,
offer or solicitation for the purchase or sale of any security, financial instrument, financial product or service,
or to be used in any way for evaluating the merits of participating in any transaction,
and shall not constitute a solicitation under any jurisdiction or to any person,
if such solicitation under such jurisdiction or to such person would be unlawful.

\section*{Acknowledgments}
We thank the anonymous reviewers for their constructive feedback, which improved the presentation of our results and in particular Theorem~\ref{thm:dp}.

\bibliography{refs}

\begin{thebibliography}{27}
\providecommand{\natexlab}[1]{#1}
\providecommand{\url}[1]{\texttt{#1}}
\expandafter\ifx\csname urlstyle\endcsname\relax
  \providecommand{\doi}[1]{doi: #1}\else
  \providecommand{\doi}{doi: \begingroup \urlstyle{rm}\Url}\fi

\bibitem[Bellman(1966)]{bellman1966dynamic}
R.~Bellman.
\newblock Dynamic programming.
\newblock \emph{Science}, 153\penalty0 (3731):\penalty0 34--37, 1966.

\bibitem[Bringmann(2017)]{bringmann2017near}
K.~Bringmann.
\newblock A near-linear pseudopolynomial time algorithm for subset sum.
\newblock In \emph{Proceedings of the Twenty-Eighth Annual ACM-SIAM Symposium on Discrete Algorithms}, pages 1073--1084. SIAM, 2017.

\bibitem[Chew(2020)]{chew2020unsupervised}
P.~Chew.
\newblock Unsupervised-learning financial reconciliation: a robust, accurate approach inspired by machine translation.
\newblock In \emph{Proceedings of the First ACM International Conference on AI in Finance}, pages 1--12, 2020.

\bibitem[Chew and Robinson(2012)]{chew2012automated}
P.~A. Chew and D.~G. Robinson.
\newblock Automated account reconciliation using probabilistic and statistical techniques.
\newblock \emph{International Journal of Accounting \& Information Management}, 20\penalty0 (4):\penalty0 322--334, 2012.

\bibitem[Conforti et~al.(2014)Conforti, Cornu{\'e}jols, Zambelli, et~al.]{conforti2014integer}
M.~Conforti, G.~Cornu{\'e}jols, G.~Zambelli, et~al.
\newblock \emph{Integer programming}, volume 271.
\newblock Springer, 2014.

\bibitem[Dantzig(1957)]{dantzig1957discrete}
G.~B. Dantzig.
\newblock Discrete-variable extremum problems.
\newblock \emph{Operations research}, 5\penalty0 (2):\penalty0 266--288, 1957.

\bibitem[Dasgupta and McGregor(1992)]{dasgupta1992nonstationary}
D.~Dasgupta and D.~R. McGregor.
\newblock Nonstationary function optimization using the structured genetic algorithm.
\newblock In \emph{PPSN}, volume~2, pages 145--154. Citeseer, 1992.

\bibitem[Eppstein(1997)]{eppstein1997minimum}
D.~Eppstein.
\newblock Minimum range balanced cuts via dynamic subset sums.
\newblock \emph{Journal of Algorithms}, 23\penalty0 (2):\penalty0 375--385, 1997.

\bibitem[Ge and McVay(2005)]{ge2005disclosure}
W.~Ge and S.~McVay.
\newblock The disclosure of material weaknesses in internal control after the sarbanes-oxley act.
\newblock \emph{Accounting Horizons}, 19\penalty0 (3):\penalty0 137--158, 2005.

\bibitem[Horowitz and Sahni(1974)]{horowitz1974computing}
E.~Horowitz and S.~Sahni.
\newblock Computing partitions with applications to the knapsack problem.
\newblock \emph{Journal of the ACM (JACM)}, 21\penalty0 (2):\penalty0 277--292, 1974.

\bibitem[Karp(2009)]{karp2009reducibility}
R.~M. Karp.
\newblock Reducibility among combinatorial problems.
\newblock In \emph{50 Years of Integer Programming 1958-2008: from the Early Years to the State-of-the-Art}, pages 219--241. Springer, 2009.

\bibitem[Kellerer et~al.(2003)Kellerer, Mansini, Pferschy, and Speranza]{kellerer2003efficient}
H.~Kellerer, R.~Mansini, U.~Pferschy, and M.~G. Speranza.
\newblock An efficient fully polynomial approximation scheme for the subset-sum problem.
\newblock \emph{Journal of Computer and System Sciences}, 66\penalty0 (2):\penalty0 349--370, 2003.

\bibitem[Kleinberg and Tardos(2006)]{kleinberg2006algorithm}
J.~Kleinberg and E.~Tardos.
\newblock \emph{Algorithm design}.
\newblock Pearson Education India, 2006.

\bibitem[Koiliaris and Xu(2019)]{koiliaris2019faster}
K.~Koiliaris and C.~Xu.
\newblock Faster pseudopolynomial time algorithms for subset sum.
\newblock \emph{ACM Transactions on Algorithms (TALG)}, 15\penalty0 (3):\penalty0 1--20, 2019.

\bibitem[Manual(1987)]{manual1987ibm}
C.~U. Manual.
\newblock Ibm ilog cplex optimization studio.
\newblock \emph{Version}, 12:\penalty0 1987--2018, 1987.

\bibitem[Martello and Toth(1977)]{martello1977upper}
S.~Martello and P.~Toth.
\newblock An upper bound for the zero-one knapsack problem and a branch and bound algorithm.
\newblock \emph{European Journal of Operational Research}, 1\penalty0 (3):\penalty0 169--175, 1977.

\bibitem[Martello et~al.(1999)Martello, Pisinger, and Toth]{martello1999dynamic}
S.~Martello, D.~Pisinger, and P.~Toth.
\newblock Dynamic programming and strong bounds for the 0-1 knapsack problem.
\newblock \emph{Management science}, 45\penalty0 (3):\penalty0 414--424, 1999.

\bibitem[Martello et~al.(2000)Martello, Pisinger, and Toth]{martello2000new}
S.~Martello, D.~Pisinger, and P.~Toth.
\newblock New trends in exact algorithms for the 0--1 knapsack problem.
\newblock \emph{European Journal of Operational Research}, 123\penalty0 (2):\penalty0 325--332, 2000.

\bibitem[Mazyavkina et~al.(2021)Mazyavkina, Sviridov, Ivanov, and Burnaev]{mazyavkina2021reinforcement}
N.~Mazyavkina, S.~Sviridov, S.~Ivanov, and E.~Burnaev.
\newblock Reinforcement learning for combinatorial optimization: A survey.
\newblock \emph{Computers \& Operations Research}, 134:\penalty0 105400, 2021.

\bibitem[{\"O}ncan(2007)]{oncan2007survey}
T.~{\"O}ncan.
\newblock A survey of the generalized assignment problem and its applications.
\newblock \emph{INFOR: Information Systems and Operational Research}, 45\penalty0 (3):\penalty0 123--141, 2007.

\bibitem[Pisinger(1997)]{pisinger1997minimal}
D.~Pisinger.
\newblock A minimal algorithm for the 0-1 knapsack problem.
\newblock \emph{Operations Research}, 45\penalty0 (5):\penalty0 758--767, 1997.

\bibitem[Pisinger(1999)]{pisinger1999linear}
D.~Pisinger.
\newblock Linear time algorithms for knapsack problems with bounded weights.
\newblock \emph{Journal of Algorithms}, 33\penalty0 (1):\penalty0 1--14, 1999.

\bibitem[Pisinger(2003)]{pisinger2003dynamic}
D.~Pisinger.
\newblock Dynamic programming on the word ram.
\newblock \emph{Algorithmica}, 35\penalty0 (2):\penalty0 128--145, 2003.

\bibitem[Sahni(1974)]{sahni1974computationally}
S.~Sahni.
\newblock Computationally related problems.
\newblock \emph{SIAM Journal on computing}, 3\penalty0 (4):\penalty0 262--279, 1974.

\bibitem[Schreiber et~al.(2018)Schreiber, Korf, and Moffitt]{schreiber2018optimal}
E.~L. Schreiber, R.~E. Korf, and M.~D. Moffitt.
\newblock Optimal multi-way number partitioning.
\newblock \emph{Journal of the ACM (JACM)}, 65\penalty0 (4):\penalty0 1--61, 2018.

\bibitem[Schrijver et~al.(2003)]{schrijver2003combinatorial}
A.~Schrijver et~al.
\newblock \emph{Combinatorial optimization: polyhedra and efficiency}, volume~24.
\newblock Springer, 2003.

\bibitem[Vorhies(2006)]{vorhies2006account}
J.~B. Vorhies.
\newblock Account reconciliation: An underappreciated control.
\newblock \emph{Journal of Accountancy}, 202\penalty0 (3):\penalty0 59, 2006.

\end{thebibliography}

\newpage
\appendix
\onecolumn

\section{Skipped Proofs}\label{app:skipped}
\begin{proof}[Proof of Theorem~\ref{thm:search}]
  To argue correctness, the argument preceding the theorem statement in Section~\ref{sec:search-solver} handles the case when there exists a match for the given $\text{SSMP}^-$ instance. Assuming there does not exist a match, the algorithm will correctly output $\perp$ as it verifies the solution before returning.

  To argue the running time bounds, constructing $\mathcal{C}$ takes $O(2^r)$ time and $h$ takes $O(2^{N-r})$ time. (This is only time to compute sums. We can then use these sums to obtain the final answer in the same time. In practice, one might store all subsets, which would increase the theoretical running time bounds but may be ultimately faster.) The number of iterations of the outer for loop is $O(2^{M+r})$. As for the inner for loop, in the integer case, there are $O(\epsilon)$ unique entries per key in $h$. Therefore, the inner loop only takes $O(\epsilon)$ time, leading to the stated running time.

  The space bound is given by evaluating the sizes of $\mathcal{C}$ and $h$.
\end{proof}

\begin{proof}[Proof of Proposition~\ref{prop:dp-mapping}]
  As noted prior to the proposition statement in Section~\ref{sec:dp-solver}, there is a bijection between elements in $\vec \eta$ and $\vec \lambda$ and elements in $\vec a$ and $\vec b$. This bijection and its inverse are easy to maintain as a simple lookup table with constant time queries per element, leading to the $O(M+N)$ running time.

  Now suppose that $\langle \vec a_w, \vec b_v\rangle$ is a valid match in $\text{SSMP}^-(\vec a,\vec b,\epsilon)$.
  Let $g : \{0,1\}^{M'} \times \{0,1\}^{N'} \to \{0,1\}^M \times \{0,1\}^N$ be the bijection mapping 
  subsets of $\vec \eta$ and $\vec \lambda$ to subsets of $\vec a$ and $\vec b$. Let $g(w', v') = (w,v)$.
  Furthermore, let 
  \begin{align*}
      \vec{\bar a}^+ &:= [\bar a: \bar a > 0, \bar a \in \vec{ \bar a}] \\
      \vec{\bar a}^- &:= [-\bar a: \bar a < 0, \bar a \in \vec{ \bar a}] \\
      \vec{\bar b}^+ &:= [\bar b: \bar b > 0, \bar b \in \vec{ \bar b}] \\
      \vec{\bar b}^- &:= [-\bar b: \bar b < 0, \bar b \in \vec{ \bar b}]
  \end{align*}
  With these definitions, $\vec \eta = \vec{\bar a}^+ \oplus \vec{\bar b}^-$ and $\vec \lambda = \vec{\bar a}^- \oplus \vec{\bar b}^+$. Also let $\vec w' = \vec {w'_1} \oplus \vec{w_2'}$ and $\vec v' = \vec {v'_1} \oplus \vec{v'_2}$. Then
  \begin{align*}
      &|\vec w' \cdot \vec \eta - \vec v'\cdot \vec \lambda|\\
      =&|\vec {\bar a}^+ \cdot \vec {w'_1} + \vec{\bar b}^-\cdot \vec{w'_2} - (\vec{\bar a}^- \cdot \vec {v'_1} + \vec{\bar b}^+ \cdot \vec{v'_2})|\\
      =&|\vec {\bar a}^+  \cdot \vec {w'_1} - \vec{\bar a}^- \cdot \vec {v'_1}  - (- \vec{\bar b}^-\cdot \vec{w'_2} + \vec{\bar b}^+ \cdot \vec{v'_2})|\\
      =&|\vec {\bar a} \cdot \vec w - \vec{\bar b} \cdot \vec v|\\
      \le&|\vec {\bar a} \cdot \vec w - \rho \vec a \cdot \vec w| + |\vec {\bar b} \cdot \vec v - \rho \vec b \cdot \vec v| + |\rho \vec a \cdot \vec w - \rho \vec b \cdot \vec v| \\
      \le& \frac{M}{2} + \frac{N}{2} + \rho \epsilon\\
      \le&\bar \epsilon,
  \end{align*}
  where the third equality follows as $g(w',v') = (w,v)$ and the second inequality follows as $\vec {\bar a}$ and $\vec {\bar b}$ are constructed from $\vec a$ and $\vec b$ via rounding and $\langle \vec a_w, \vec b_v\rangle$ is a valid match in $\text{SSMP}^-(\vec a, \vec b, \epsilon)$. 

  Note that if $\rho = 1$ and $\mathcal A, \mathcal B \subseteq \mathbb Z$, then 
  $\vec {\bar a} = \vec a$ and $\vec {\bar b} = \vec b$. Therefore, the above argument can easily show that
  $|\vec w' \cdot \eta - \vec v' \cdot \lambda| \le \epsilon$ in the integer case.
  This concludes the proof.
\end{proof}

\begin{proof}[Proof of Theorem~\ref{thm:dp}]
  Regarding correctness, we first claim that the DP algorithm iterates over all valid matches in $\text{SSMP}^-(\vec \eta, \vec \lambda, \bar \epsilon)$. Let $\langle \vec \eta_{w'}, \vec \lambda_{v'}\rangle$ be a valid match in $\text{SSMP}^-(\vec \eta, \vec \lambda, \bar \epsilon)$. Letting  $|\vec w' \cdot \vec \eta - \vec v' \cdot \vec \lambda| = e$, we then have $e \le \bar \epsilon$. Therefore, we will discover this valid match in the DP algorithm when considering $\mathcal H_e$ as $\mat T_{\vec \eta}[M', \vec w' \cdot \vec \eta] = True$ and $\mat T_{\vec \lambda} [N', \vec v' \cdot \vec \lambda ] = True$.

  With this claim, by Proposition~\ref{prop:dp-mapping}, it is clear that we will ultimately either encounter a valid match in $\text{SSMP}^-(\vec a, \vec b, \epsilon)$ if it exists, or we will correctly find that no such valid match exists. Therefore, the DP algorithm solves $\text{SSMP}^-(\vec a, \vec b, \epsilon)$ exactly.

  For the positive integer case, it is important to observe that $\vec \eta = \vec a$ and $\vec \lambda  = \vec b$. This will change two important aspects of the algorithm: 
  (1) we can set $\bar \epsilon = \epsilon$ and
  (2) we do not have to iterate over all solutions in $\mat T_{\vec \eta}$ and $\mat T_{\vec \lambda}$ as solutions in $\text{SSMP}^-(\vec \eta, \vec \lambda, \epsilon)$ are solutions in $\text{SSMP}^-(\vec a, \vec b, \epsilon)$.
  
  To analyze the running time for the positive integer case, it takes $O((M+N)\cdot X)$ time to construct $\mat T_{\vec \eta}$ and $\mat T_{\vec \lambda}$. In the positive integer case, as noted above, any solution in $\text{SSMP}^-(\vec \eta, \vec \lambda, \epsilon)$ is a solution in $\text{SSMP}^-(\vec a, \vec b, \epsilon)$. Therefore, any pair $(i,j) \in \mathcal{H}$ corresponds to a solution in $\text{SSMP}^-(\vec a, \vec b, \epsilon)$. It takes $O(M+N)$ time to find one pair in $\mathcal{H}$. This can easily be done by simple accounting when iterating over both of the last rows in $\mat T_{\vec \eta}$ and $\mat T_{\vec \lambda}$ (e.g., iterate over one array while keeping track of whether the corresponding indices in the other have at least one $True$). 
  Finally, for any solution pair $(i,j) \in \mathcal{H}$, it takes $O((M+N)\cdot X)$ time to perform the binary search algorithm described in Section~\ref{sec:dp-tree-search} as it takes $O((M+N)\cdot X)$ time to find the candidate sums in each table. 
  This proves the bound on the running time.

  The space bound follows from the sizes of the tables $\mat T_{\vec \eta}$ and $\mat T_{\vec \lambda}$.
\end{proof}

\end{document}